\newtheorem{theorem}{Theorem}
\def\Tr{{\mbox{Tr}}}
\title{Regularized Singular Value Decomposition and \\ Application to Recommender System}
\author{Shuai Zheng, Chris Ding, Feiping Nie \\
University of Texas at Arlington\\
\texttt{zhengs123@gmail.com}, \texttt{chqding@uta.edu},\\ \texttt{feipingnie@gmail.com}}
\begin{document}

\maketitle

\begin{abstract}
Singular value decomposition (SVD) is the mathematical basis of principal component analysis (PCA). Together, SVD and PCA are one of the most widely used mathematical formalism/decomposition in machine learning, data mining, pattern recognition, artificial intelligence, computer vision, signal processing, etc..
In recent applications, regularization becomes an increasing trend.
In this paper, we present a regularized SVD (RSVD), present an efficient computational algorithm, and provide several theoretical analysis.
We show that although RSVD is non-convex, it has a closed-form global optimal solution.
Finally, we apply RSVD to the application of recommender system and experimental result show that RSVD outperforms SVD significantly.
\end{abstract}

\section{Introduction}
{\it Singular value decomposition} (SVD), its statistical form {\it principal component analysis} (PCA) and
{\it Karhunen-Loeve Transform} in signal processing,
are one of the most widely used mathematical formalism/decomposition in machine learning, data mining, pattern recognition, artificial intelligence, computer vision, signal processing, etc..

Mathematically, SVD can be seen as the best low-rank approximation to a rectangle matrix. The left and right singular vectors are mutually orthogonal, and provide orthogonal basis for row and column subspaces.
When the data matrix are centered as in most statistical analysis,
the singular vectors become eigenvectors of the covariance matrix and provide mutually uncorrelated/de-correlated subspaces which are much easier to use for statistical analysis. This form of SVD is generally referred to as PCA, and is widely used in statistics.

In its most simple form, SVD/PCA provides the most widely used dimension reduction for pattern analysis and data mining. SVD/PCA has numerous applications in engineering, biology, and social science \cite{jolliffe2005principal,zou2006sparse,zheng2014kernel,zheng2015closed,zheng2016harmonic,zheng2017machine}, such as handwritten zip code classification \cite{friedman2001elements}, human face recognition \cite{hancock1996face}, gene expression data analysis \cite{alter2000singular}, recommender system \cite{billsus1998learning}. As many big data, deep learning, cloud computing technologies were developed \cite{zheng2011analysis,zhang2012virtual,williams2014tidewatch,zheng2016accelerating,zheng2017long}, SVD/matrix decomposition has been integrated into commercial big data platforms, such as Hadoop Mahout framework.

In recent developments of machine learning and data mining, regularization becomes an increasing trend. Adding a regularization term to the loss function can increase the smoothness of the factor matrices and introduce more zero components to the factor matrices, such as sparse PCA \cite{shen2008sparse} \cite{guan2009sparse}. Sparse PCA has many applications in text mining, finance and gene data analysis \cite{zhang2012sparse} \cite{d2007direct}. Minimal Support Vector Machine \cite{zheng2018minimal} enforces sparsity on the number of support vectors.
In this paper, we present a regularized SVD (RSVD), present an efficient computational algorithm, and provide several theoretical analysis.
We show that although the RSVD is a non-convex formulation, it has a global optimal closed-form solution. Finally, we apply RSVD to recommender system on four real life datasets. RSVD based recommender system outperforms the standard SVD based recommender system.

{\bf Notations.} In this paper, matrices are written in uppercase letters, such as $X, \; Y$.
$\Tr(X)$ denotes the trace operation for matrix $X$.

\section{Regularized SVD (RSVD)}
Assume there is a matrix $X \in \Re^{n \times m}$. Regularized SVD (RSVD) tries to find low-rank approximation using regularized factor matrices $U$ and $V$. The objective function is proposed as
\begin{align}
\label{eq:j1}
J_1 = \| X - UV^T \|^2_F + \lambda \|U\|^2_F + \lambda \|V\|^2_F,
\end{align}%
where low-rank regularized factor matrices $U \in \Re^{n \times k}$ and $V \in \Re^{m \times k}$, $k$ is the rank of regularized SVD. Minimizing Eq.(\ref{eq:j1}) is a multi-variable problem. We will now present a faster Algorithm \ref{alg:rsvd} to solve this problem.

Eq.(\ref{eq:j1}) can be minimized in 2 steps:

A1. Fixing $V$, solve $U$. Take derivative of Eq.(\ref{eq:j1}) with respect to $U$ and set it to zero,
\begin{align}
\frac{\partial J_1}{\partial U} = -XV + UV^TV + \lambda U = 0.
\end{align}%
Thus we have Eq.(\ref{eq:u}):
\begin{align}
U = XV (V^T V+ \lambda I)^{-1}. \label{eq:u}
\end{align}%

A2. Fixing $U$, solve $V$. Take derivative of Eq.(\ref{eq:j1}) with respect to $V$ and set it to zero,
\begin{align}
\frac{\partial J_1}{\partial V} = -X^T U + V U^T U + \lambda V = 0.
\end{align}%
Thus we can get the solution Eq.(\ref{eq:v}):
\begin{align}
V = X^T U (U^T U+ \lambda I)^{-1}. \label{eq:v}
\end{align}%

It is easy to prove that function value $J_1$ is monotonically decreasing. To minimize objective function of Eq.(\ref{eq:j1}), we propose an iterative Algorithm \ref{alg:rsvd}. We initialize $V$ using a random matrix. Then we minimize Eq.(\ref{eq:j1}) iteratively, until it converges. The converge speed is actually affected by the regularization weight parameter $\lambda$. In experiment section, we will show that RSVD converges faster than SVD ($\lambda=0$).

Will the random initialization of matrix $V$ in step 1 of Algorithm \ref{alg:rsvd} affect the final solution?
Is the solution of Algorithm \ref{alg:rsvd} unique?
Below, we present theoretical analysis and vigorously prove that
there is a unique global solution and the above iterative algorithm converge to the global solution.

\begin{algorithm}[t]
\small
\caption{Regularized SVD (RSVD)}
\label{alg:rsvd}
\begin{algorithmic}[1]
\Require Data matrix $X \in \Re^{n \times m}$, rank $k$, regularization weight parameter $\lambda$
\Ensure Factor matrices $U \in \Re^{n \times k}$, $V \in \Re^{m \times k}$
\State Initialize matrix $V$ using a random matrix
\Repeat
\State Compute $U$ using Eq.(\ref{eq:u})
\State Compute $V$ using Eq.(\ref{eq:v})
\Until $J_1$ converges
\end{algorithmic}
\end{algorithm}

\section{RSVD solution is in SVD subspace}
Here we establish two important theoretical results: Theorems \ref{tm:thm21} and \ref{tm:thm22}, which show RSVD solution is in SVD subspace.

The singular value decomposition (SVD) of $X$ is given as
\begin{align}
X = F \Sigma G^T, \label{eq:xsvd}
\end{align}%
where $F =(f_1,\cdots,f_r) \in \Re^{n \times r}$ are the left singular vectors,
$G =(g_1,\cdots,g_r) \in \Re^{m \times r}$ are the right singular vectors,
 $\Sigma = \mbox{diag}(\sigma_1, ..., \sigma_r) \in \Re^{r \times r}$ contains singular values,
 and $r$ is the rank of $X$. $\sigma_1, ..., \sigma_r$ are sorted in decreasing order.

We now present Theorem \ref{tm:thm21} and \ref{tm:thm22} to show that RSVD solution is in subspace of SVD solution.
Let $V$ be the optimal solution of RSVD. Let the QR decomposition of $V \in \Re^{m \times k}$ be
 \begin{align}
V = V_{\perp}\Omega, \label{eq:qr}
\end{align}%
where $V_{\perp}\in \Re^{m \times k}$ is an orthonormal matrix and $\Omega \in \Re^{k \times k}$ is an upper triangular matrix.
\begin{theorem}
\label{tm:thm21}
Matrix $\Omega$ in Eq.(\ref{eq:qr}) is a diagonal matrix.
\end{theorem}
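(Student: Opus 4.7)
My plan is to use the stationarity conditions in Section~2 together with the orthogonal invariance of the RSVD objective to force $\Omega$ into diagonal form. First I would combine Eqs.~(3) and (5) to extract a clean identity. Transposing $U(V^TV+\lambda I)=XV$ and right-multiplying by $U$ gives $(V^TV+\lambda I)U^TU=V^TX^TU$; left-multiplying $V(U^TU+\lambda I)=X^TU$ by $V^T$ gives $V^TV(U^TU+\lambda I)=V^TX^TU$. Equating these and cancelling the common bilinear term $V^TV\cdot U^TU$ forces
\[
U^TU = V^TV.
\]
Setting $B:=V^TV$ and $M:=B+\lambda I$, the stationarity conditions collapse to $UM=XV$ and $VM=X^TU$, whose composition yields the eigen-relation $X^TXV=VM^{2}$; in particular the columns of $V$ span a $k$-dimensional invariant subspace of $X^TX$.

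Plugging the QR decomposition $V=V_\perp\Omega$ into this and left-multiplying by $V_\perp^T$ (using $V_\perp^TV_\perp=I$ and $V^TV=\Omega^T\Omega=B$) produces the $k\times k$ intertwining identity
\[
A\,\Omega = \Omega\,M^{2}, \qquad A:=V_\perp^T X^TX\,V_\perp,
\]
with $A$ symmetric positive semidefinite and $M^{2}=(\Omega^T\Omega+\lambda I)^{2}$. Via the SVD $\Omega=P\Sigma_\Omega Q^T$ one has $M^{2}=Q(\Sigma_\Omega^{2}+\lambda I)^{2}Q^T$, and the identity rearranges to $(P^TAP)\,\Sigma_\Omega = \Sigma_\Omega(\Sigma_\Omega^{2}+\lambda I)^{2}$. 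Because the right-hand side is a product of diagonal matrices and $\Sigma_\Omega$ is invertible (full column rank of $V$), $P^TAP$ must itself be diagonal and equal to $(\Sigma_\Omega^{2}+\lambda I)^{2}$.

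To pass from this to diagonality of $\Omega^T\Omega$ I would invoke the orthogonal invariance of $J_1$ under $(U,V)\mapsto(UR,VR)$: any optimal $V$ can be right-rotated to align the two orthogonal SVD factors so that $P=Q$ up to a signed permutation; since $J_1$ is unchanged the rotated $V$ is still optimal, and for this representative $\Omega^T\Omega=Q\Sigma_\Omega^{2}Q^T=\Sigma_\Omega^{2}$ is diagonal. The final step is routine: an upper triangular $\Omega$ whose Gram matrix is diagonal must itself be diagonal, by column-by-column induction --- if the first $j-1$ columns are already scalar multiples of the coordinate vectors $e_1,\dots,e_{j-1}$ by the triangular shape, orthogonality of column $j$ against each of them kills every super-diagonal entry of column $j$. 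The hardest step will be the alignment of $P$ and $Q$ via invariance, because the intertwining identity alone only fixes them up to rotations inside any repeated eigenspaces of $(\Sigma_\Omega^{2}+\lambda I)^{2}$; this is where either a genericity assumption on the singular values of $X$ or the explicit use of the rotational symmetry of $J_1$ must be brought to bear.
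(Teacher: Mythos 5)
Your computations up to the intertwining identity are all correct: for $\lambda>0$ the two stationarity equations do give $U^TU=V^TV$, hence $X^TXV=VM^2$ and $A\Omega=\Omega M^2$ with $A=V_\perp^TX^TXV_\perp$. But the decisive step is both left open and aimed at the wrong target. Writing $\Omega=P\Sigma_\Omega Q^T$, the relation $A\Omega=\Omega M^2$ is equivalent to $AP=P(\Sigma_\Omega^2+\lambda I)^2$ and places \emph{no constraint whatsoever} on $Q$: since $M^2=Q(\Sigma_\Omega^2+\lambda I)^2Q^T$ and diagonal matrices commute, the identity holds for every orthogonal $Q$. So $Q$ is pure gauge, and no amount of exploiting the intertwining relation --- with or without a genericity assumption on the singular values --- will ``align $P$ and $Q$''; that step, as planned, fails. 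What does work, and makes the whole stationarity detour unnecessary, is the rotational symmetry you mention, used directly: $J_1$ is invariant under $(U,V)\mapsto(UR,VR)$, so if $(U,V)$ is optimal then so is $(UQ,VQ)$, and $VQ=(V_\perp P)\,\Sigma_\Omega$ is already a QR decomposition with orthonormal factor $V_\perp P$ and \emph{diagonal} triangular factor $\Sigma_\Omega$. (This also shows the theorem can only be meant in the ``there exists an optimal solution with diagonal $\Omega$'' sense: rotating such an optimum by a generic $R$ produces another optimum whose QR factor is genuinely upper triangular.)

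For comparison, the paper reaches the same conclusion by substitution rather than stationarity: it eliminates $U$ via Eq.~(\ref{eq:u}) to obtain $J_1$ as a function of $V$ alone, inserts $V=V_\perp\Omega$, and observes that with $\Omega^T\Omega=C\Lambda C^T$ the objective $\Tr\bigl(A-B\Omega(\Omega^T\Omega+\lambda I)^{-1}\Omega^T+\lambda\Omega^T\Omega\bigr)$ depends only on $\Lambda$, so $C$ may be set to $I$. That is the same gauge-invariance observation in disguise. Your $U^TU=V^TV$ identity and the invariant-subspace relation are correct and interesting, but they do no work for this particular theorem; they are better suited to Theorems~\ref{tm:thm22} and~\ref{tm:thm3}.
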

\begin{proof}
Substituting Eq.(\ref{eq:u}) back into Eq.(\ref{eq:j1}), we have a formulation of $V$ only,
{\scriptsize
\begin{align}
J_1(V)= \Tr(X^TX - X^T XV (V^TV+\lambda I)^{-1} V^T  + \lambda V^TV )\label{eq:xv}
\end{align}
}%
Using Eq.(\ref{eq:qr}) and fixing $V_{\perp}$, we have
\begin{align}
J_1(\Omega)= \Tr(A - B \Omega (\Omega^T\Omega+\lambda I)^{-1} \Omega^T  + \lambda \Omega^T\Omega ), \label{eq:jo}
\end{align}
where $A=X^T X, B=V_{\perp}^T X^T XV_{\perp}$ are independent of $\Omega$. Let the eigen-decomposition of $\Omega^T\Omega = C \Lambda C^T, \Omega=\Lambda^{1/2}C^T$. Eq.(\ref{eq:jo}) now becomes
\begin{align}
J_1(\Lambda)= \Tr(A - B \Lambda^{1/2} (\Lambda+\lambda I)^{-1} \Lambda^{1/2}  + \lambda \Lambda ),
\end{align}
where $C$ cancel out exactly. Thus $J_1$ is independent of $C$;
$J_1$ depends on the eigenvalues of $\Omega^T\Omega$. For this reason, we can
 set $C=I$, $\Omega=\Lambda^{1/2}$ is a diagonal matrix.
\end{proof}

\begin{theorem}
\label{tm:thm22}
RSVD solution $V_{\perp}$ of Eq.(\ref{eq:qr}) is in the subspace of SVD singular vectors $G$, as in Eq.(\ref{eq:xsvd}).
\end{theorem}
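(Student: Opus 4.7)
The plan rests on the observation that any optimum $V$ must satisfy the stationarity condition Eq.~(\ref{eq:v}), $V = X^T U(U^T U + \lambda I)^{-1}$. This immediately places every column of $V$ inside $\mathrm{range}(X^T)$, which by the SVD $X = F\Sigma G^T$ of Eq.~(\ref{eq:xsvd}) equals $\mathrm{span}(G)$. What remains is to transfer this property from $V$ to its QR factor $V_{\perp}$ in Eq.~(\ref{eq:qr}), using Theorem~\ref{tm:thm21}.

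First I would invoke Eq.~(\ref{eq:v}) and write $V = X^T\bigl(U(U^T U + \lambda I)^{-1}\bigr)$, so each column of $V$ is a linear combination of the columns of $X^T$. Since $X^T = G\Sigma F^T$, the range of $X^T$ is exactly $\mathrm{span}(G)$; concretely, $V = GM$ for some $M \in \Re^{r \times k}$. Combined with Theorem~\ref{tm:thm21}, which makes $\Omega$ diagonal, the factorization Eq.~(\ref{eq:qr}) becomes $V_{\perp}\Omega = GM$.

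Second, in the generic case when $\Omega$ is invertible (all diagonal entries strictly positive), I invert to obtain $V_{\perp} = G(M\Omega^{-1})$, which exhibits the columns of $V_{\perp}$ as linear combinations of the columns of $G$. Hence $V_{\perp} \subset \mathrm{span}(G)$, which is exactly the conclusion of the theorem.

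The main obstacle is the degenerate case in which some $\Omega_{ii} = 0$. Then the corresponding column of $V$ vanishes, but the QR decomposition only constrains the corresponding column of $V_{\perp}$ to be a unit vector orthogonal to the remaining columns of $V_{\perp}$, so a priori it could stick out of $\mathrm{span}(G)$. I would resolve this by noting that such a column is multiplied by $\Omega_{ii} = 0$ everywhere it enters $J_1$ (both through $V = V_{\perp}\Omega$ and through $V^T V = \Omega^2$), so we have full freedom to rotate it into $\mathrm{span}(G)$, as long as $k \le r$, without changing the optimal value. With this harmless choice the statement $V_{\perp} \subset \mathrm{span}(G)$ holds in every case, completing the proof.
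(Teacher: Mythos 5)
Your proof is correct, but it takes a genuinely different route from the paper's. You obtain the containment $\mathrm{span}(V)\subseteq\mathrm{range}(X^T)=\mathrm{span}(G)$ directly from the stationarity condition Eq.~(\ref{eq:v}), then transfer it to the QR factor $V_{\perp}$, disposing of the degenerate columns with $\Omega_{ii}=0$ by a harmless rotation inside $\mathrm{span}(G)$ (valid as long as $k\le r$). This is elementary, needs no trace inequality, and applies to every critical point, not only the global optimum. The paper argues variationally instead: fixing $\Omega$ (diagonal by Theorem~\ref{tm:thm21}), it writes $J_1(V_{\perp})=\Tr(A-V_{\perp}^TG\Sigma^2G^TV_{\perp}D+E)$, uses Von Neumann's trace inequality to establish the lower bound $J_b=\Tr(A-\Sigma^2D+E)$ over all orthonormal $V_{\perp}$, and verifies that $V_{\perp}=G$ (i.e.\ $G_k$) attains it. The trade-off is that the paper's argument proves strictly more than the literal statement: it identifies the optimal $V_{\perp}$ as the top-$k$ right singular vectors, which is precisely the fact substituted into the proof of Theorem~\ref{tm:thm3}. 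Your version establishes only membership in the full $r$-dimensional subspace $\mathrm{span}(G)$ --- which is all the theorem as stated asks for --- but if you carried your approach forward, Theorem~\ref{tm:thm3} would still require a Von Neumann or eigenvalue-interlacing step to show that the optimal $k$ directions within $\mathrm{span}(G)$ are $g_1,\dots,g_k$ rather than some other orthonormal combination of the $r$ right singular vectors.
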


\begin{proof}
Using Eq.(\ref{eq:qr}) and fixing $\Omega$, Eq.(\ref{eq:xv}) can be written as
\begin{align}
J_1(V_{\perp})= \Tr(A -V_{\perp}^T G \Sigma^2 G^T V_{\perp} D  + E ),
\end{align}
where $A = X^TX, D = \Omega (\Omega^T\Omega+\lambda I)^{-1} \Omega^T, E = \lambda \Omega^T\Omega$ is independent of $V_{\perp}$.

We now show that
\newline
(L1) For any $V_{\perp}$, $J_1(V_{\perp})$ has a lower bound $J_b$:
\begin{align}
J_1(V_{\perp}) \geq  J_b = \Tr(A -\Sigma^2  D  + E ),  \label{eq:lowb}
\end{align}
and
\newline
(L2) the optimal $V^*_{\perp} = G$.

To prove (L2), we see that when  $V^*_{\perp} = G$,
\begin{align}
J_1(V^*_{\perp}) = \Tr(A -   G^T G \Sigma^2 G^T G D \Sigma^2  D  + E ) = J_b,
\end{align}
i.e., $J_1(V_{\perp})$ reaches the lowest possible value, the global minima. Thus $V^*_{\perp} = G$ is the global optimal solution.

To prove (L1) we use Von Neumann's trace inequality, which states that for any two matrices $P, Q$, with diagonal singular value matrix $\Lambda_P$ and $\Lambda_Q$ respectively, $|\Tr(PQ)| \leq \Tr(\Lambda_P \Lambda_Q)$.
In our case, $Q=D= \Omega (\Omega^T\Omega+\lambda I)^{-1} \Omega^T$
is already a non-negative diagonal matrix. $P =  V_{\perp}^T G \Sigma^2 G^T V_{\perp}$, and $P$'s singular values are $\Sigma^2 >0$. Thus we have
\begin{align}
\Tr (V_{\perp}^T G \Sigma^2 G^T V_{\perp} D) \leq  \Tr (\Sigma^2 D). \label{eq:von}
\end{align}
Adding constant matrices $A,E$ and notice the negative sign, the inequality Eq.(\ref{eq:von}) gives the lower bound Eq.(\ref{eq:lowb}). This proves (L1).
\end{proof}

\section{Closed form solution of RSVD}

The key results of this paper is that although RSVD
is non-convex, we can obtain the global optimal solution, as below.

Using Theorems \ref{tm:thm21} and \ref{tm:thm22}, we now present the closed form solution of RSVD.
Given Eq.(\ref{eq:u}) and Eq.(\ref{eq:qr}), as long as we solve $\Omega$, we can get the closed form solution of RSVD $U$ and $V$. The closed form solution is presented in Theorem \ref{tm:thm3}.
\begin{theorem}
\label{tm:thm3}
Let SVD of the input data $X$ be $ X = F\Sigma G^T$ as in Eq.(\ref{eq:xsvd}).
Let
$( U^* ,V^*)$ be the global optimal solution of RSVD.
We have
\begin{align}
U^* = F_k ,
V^* = G_k\Omega
\end{align}%
where
$F_k = (f_1,\cdots,f_k)$, $G_k= (g_1,\cdots,g_k)$, and
%
$\Omega = \mbox{diag}(\omega_1, ..., \omega_k) \in \Re^{k \times k}$,
\begin{align}
\omega_i = \sqrt{(\sigma_i - \lambda)_+}, \label{eq:wi}, \; i=1,\cdots,k
\end{align}%
%
\end{theorem}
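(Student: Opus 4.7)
The plan is to chain Theorems \ref{tm:thm21} and \ref{tm:thm22} with the stationarity equation Eq.(\ref{eq:u}) for $U$, and then reduce the remaining problem to scalar optimizations in the entries $\omega_i$ of $\Omega$.

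First, I would use the two previous theorems to pin down the structure of $V^*$. Theorem \ref{tm:thm22} forces the orthonormal factor $V_{\perp}$ to lie in the column span of $G$, so $V_{\perp}$ is a selection $G_S$ of $k$ columns of $G$ indexed by some set $S \subseteq \{1,\ldots,r\}$ with $|S|=k$. Theorem \ref{tm:thm21} then gives $\Omega = \mbox{diag}(\omega_1,\ldots,\omega_k)$, so $V^* = G_S\Omega$. Substituting into Eq.(\ref{eq:u}) and using $G_S^T G_S = I$ yields
\begin{align*}
U^* = XV^*(\Omega^2+\lambda I)^{-1} = F_S \Sigma_S \Omega (\Omega^2+\lambda I)^{-1},
\end{align*}
where $\Sigma_S = \mbox{diag}(\sigma_i)_{i\in S}$, so $U^*$ is automatically $F_S$ times a diagonal rescaling.

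Next, I would plug these parametric forms back into $J_1$ and simplify. By orthonormality of $F, G$ and diagonality of $\Omega$, the three trace terms in Eq.(\ref{eq:j1}) decouple index-by-index, giving
\begin{align*}
J_1 = \sum_{i\notin S}\sigma_i^2 \; + \; \sum_{i\in S} h_i(\omega_i),
\end{align*}
where $h_i$ is a quartic in $\omega_i$. Routine differentiation gives the critical equation $\omega_i(\omega_i^2-(\sigma_i-\lambda))=0$, with the two candidates $\omega_i=0$ and $\omega_i=\sqrt{\sigma_i-\lambda}$ (the latter only if $\sigma_i>\lambda$). Comparing values of $h_i$ shows the nonzero root wins iff $\sigma_i>\lambda$, which is precisely $\omega_i=\sqrt{(\sigma_i-\lambda)_+}$ of Eq.(\ref{eq:wi}).

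Finally, I would optimize over the choice of $S$. The minimum value of $h_i$ works out to $\sigma_i^2 - (\sigma_i-\lambda)_+^2$, so
\begin{align*}
\min J_1 = \sum_{i=1}^r \sigma_i^2 \; - \; \sum_{i\in S}(\sigma_i-\lambda)_+^2,
\end{align*}
which is minimized by choosing $S$ to maximize $\sum_{i\in S}(\sigma_i-\lambda)_+^2$. Since the $\sigma_i$ are sorted in decreasing order, the optimum is $S=\{1,\ldots,k\}$, which yields $V^* = G_k\Omega$ and the corresponding $U^* = F_k \Sigma_k \Omega (\Omega^2+\lambda I)^{-1}$ claimed by the theorem.

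The main obstacle I expect is the subset-selection step: Theorems \ref{tm:thm21} and \ref{tm:thm22} only guarantee that $V_{\perp}$ lies in the right-singular-vector subspace of $X$, not that it coincides with the top-$k$ singular vectors. Justifying this ordering requires the explicit scalar minima above, and one must also be careful with the degenerate case $\sigma_i\le\lambda$, for which the optimal $\omega_i$ collapses to $0$ and the corresponding component drops out of both factors.
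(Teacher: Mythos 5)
Your proof follows essentially the same route as the paper's: use Theorems \ref{tm:thm21} and \ref{tm:thm22} to reduce $V$ to (right singular vectors)$\times$(diagonal $\Omega$), then minimize separately in each $\omega_i$; your critical equation is equivalent to the paper's Eq.(\ref{EQ:closeForm}). Where you go beyond the paper is the subset-selection step: the paper's proof silently substitutes the top-$k$ right singular vectors into Eq.(\ref{eq:xv}) and never argues why those $k$ columns of $G$, whereas you compute the per-index minimum $\sigma_i^2-(\sigma_i-\lambda)_+^2$ and show that the decreasing ordering of the $\sigma_i$ forces $S=\{1,\dots,k\}$; you also compare the two critical points $\omega_i=0$ and $\omega_i=\sqrt{\sigma_i-\lambda}$ rather than only solving the first-order condition, which is what actually justifies the $(\cdot)_+$. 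Two caveats on your write-up. First, Theorem \ref{tm:thm22} only places $V_{\perp}$ in the span of $G$, i.e.\ $V_{\perp}=GW$ with $W^TW=I$; passing from an arbitrary such $W$ to a column selection $G_S$ needs one more step (e.g.\ another application of the Von Neumann trace inequality to $\Tr(W^T\Sigma^2 W D)$ with $D$ diagonal), a step that both you and the paper elide. Second, your $U^*=F_k\Sigma_k\Omega(\Omega^2+\lambda I)^{-1}$ simplifies, using $\omega_i^2+\lambda=\sigma_i$ on the active indices, to $F_k\Omega$; this is the correct stationary point (symmetric with $V^*=G_k\Omega$, and consistent with Eq.(\ref{eq:u})), but it is not literally the $U^*=F_k$ stated in the theorem, so you should flag the discrepancy rather than assert agreement.
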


\begin{proof}
Substituting Eq.(\ref{eq:qr}) back to Eq.(\ref{eq:xv}) and using $G^TG = I$, we have
\begin{align}
J_1(\Omega) = \Tr( A - \Sigma^2\Omega^2(\Omega^2 + \lambda I)^{-1} + \lambda \Omega^2),
\end{align}%
where $A = G \Sigma^2G^T$ is a constant independent of $\Omega$. Noting that all the matrices are diagonal, we can minimize $J_1$ element-wisely with respect to $\omega_i$, $i = 1, ..., k$. Taking the derivative of $J_1$ respect to $\omega_i$ and setting it to zero, we have
\begin{align}\label{EQ:closeForm}
\omega_i^2 = (\sigma_i - \lambda)_+ ,
\end{align}%
because $\omega_i \geq 0$. From this, we finally have Eq.(\ref{eq:wi}).
\end{proof}%

One consequence of Theorem \ref{tm:thm3} is that the choice of
parameter $\lambda$ become obvious:
it should be closedly related to parameter $k$, the rank of $U, V$.

We should set $ \lambda$ such that  $\{\omega_i\} > 0$ so that
no columns of $U, V$ are waisted.

Another point to make is that directly computing $U,V$ from Algorithm 1
is generally faster than compute the SVD of $X$, because generally,
$k$ are much smaller than rank($X$), thus computing full rank SVD of $X$ is not
necessary.

{\bf Computational complexity analysis}. From Theorem 3, a single SVD computation can obtain the global solution.
If we desire a strong regularization, we set $\lambda$ large, and compute SVD upto the appropriate rank using Eq.(\ref{EQ:closeForm}).
The computation complexity is $O[k(n+m)\min(n,m)]$.
We may use Algorithm 1 to directly compute RSVD without computing SVD.
Theoretically, this is
  faster than computing the SVD because the regularization term  $(V^T V+ \lambda I)^{-1}$ makes
  Algorithm \ref{alg:rsvd} converge faster for larger regularization $\lambda$. The
  The computation complexity is $O(kmn)$. Inverting
  the $k \times k$ matrix  $(V^T V+ \lambda I)$ is fast since $k$ is typically much smaller than $\min(n,m)$.


Numerical experiments are given below.

\section{Application to Recommender Systems}

Recommender system generally uses collaborative filtering
~\cite{billsus1998learning}. This is often viewed
  as a dimensionality reduction problem and their best-performing algorithm is based on singular value decomposition (SVD) of a user ratings matrix. By exploiting the latent structure (low rank) of user ratings, SVD approach eliminates the need for users to rate common items. In recent years, SVD approach has been widely used as an efficient collaborative filtering algorithm \cite{jester} \cite{billsus1998learning} \cite{sarwar2001item} \cite{kurucz2007methods} \cite{sarwar2000application}.

User-item rating matrix $X$ generally is a very sparse matrix with only values 1,2,3,4,5. Zeros elements imply that matrix entry
has not been filled
because each user usually only rates a few items. Similarly, each item is only rated by a small subset of users.
Thus recommender system is in essence of estimating missing values of the rating matrix.

Assume we have a user-item rating matrix $X \in \Re^{n \times m}$, where $n$ is the number of users and $m$ is the number of items (i.g., movies).
Some ratings in matrix $X$ are missing. Let  $\Omega$ be the set of
$i,j$ indexes that the matrix element has been set.
Recommender system using SVD solves the following problem:
\begin{align}
\min_{U,V} \| X - UV^T \|^2_{\Omega}, \label{eq:svdobj}
\end{align}%
with fixed rank $k$ of $U,V$,
where for any matrix $A$, $\| A\|^2_{\Omega} = \sum_{(i,j) \in \Omega} A_{ij}^2$.

Low-rank $U$ and $V$ can expose the underlying latent structure.
However, because $X$ is sparse, $U,V$ is forced to match
a sparse structure and thus could overfit.
Adding a regularization term will make $U$ and $V$ more smooth, and thus could reduce the overfitting.
For this reason, we propose the regularized SVD recommender system as the following problem
\begin{align}
\min_{U,V} \| X - UV^T \|^2_{\Omega} + \lambda \|U\|_F^2 + \lambda \|V\|_F^2. \label{eq:rsvdobj}
\end{align}%
Both Eqs.(\ref{eq:svdobj},\ref{eq:rsvdobj}) are solved by an EM-like algorithm ~\cite{srebro2003weighted} ~\cite{koren2009matrix},
 which first fills the missing values with column or row averages,
solving the low-rank reconstruction problem as the usual problem without missing values, and then update the missing values of $X$ using the new SVD result. This is repeated until convergence. The RSVD algorithm presented above is used to solve Eqs.(\ref{eq:svdobj},\ref{eq:rsvdobj}).

\section{Experiments}

Here we compare recommender systems using the Regularized SVD of Eq.(\ref{eq:rsvdobj}) and classical SVD of Eq.(\ref{eq:svdobj}) on four datasets.

{\noindent\bf Datasets}.
Table \ref{tab:data} summarizes the user number $n$ and item number $m$ of the 4 datasets.
\begin{table}[t!]
\centering
\small
\caption{Recommender system datasets.}
\label{tab:data}
\scalebox{1}{
\begin{tabular}{c|cc}
\hline\hline
Data    & user ($n$)   & item ($m$) \\
\hline
MovieLens    &943    & 1682   \\
RottenTomatoes &931 & 1274    \\
Jester1    &1731 & 100   \\
Jester2  & 1706 & 100\\
\hline\hline
\end{tabular}}
\end{table}

\begin{figure}[t!]
\centering
\begin{subfigure}{.24\textwidth}
  \centering
  \includegraphics[width=.96\textwidth]{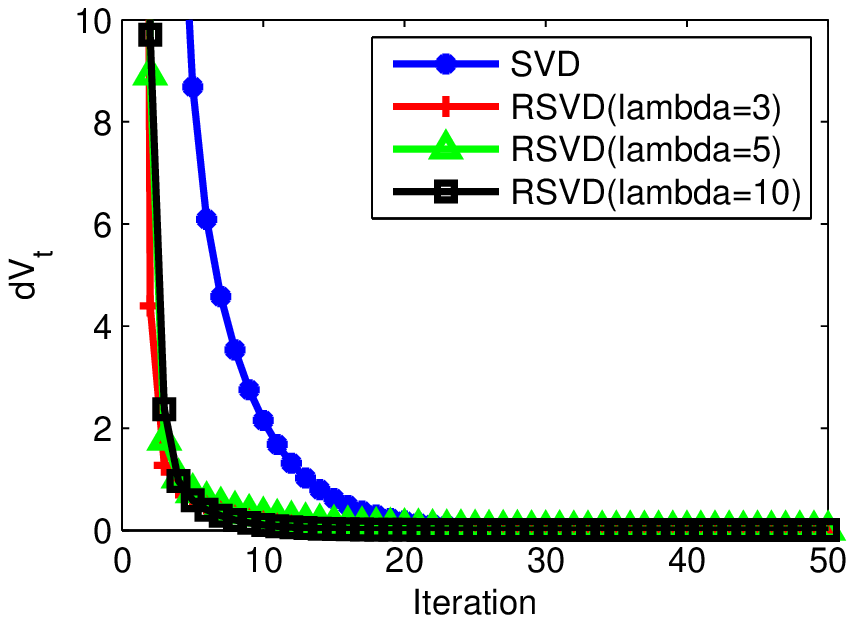}
  \caption{MovieLens($k=3$)}
  \label{fig:movielens_j1}
\end{subfigure}%
\begin{subfigure}{.24\textwidth}
  \centering
  \includegraphics[width=.96\textwidth]{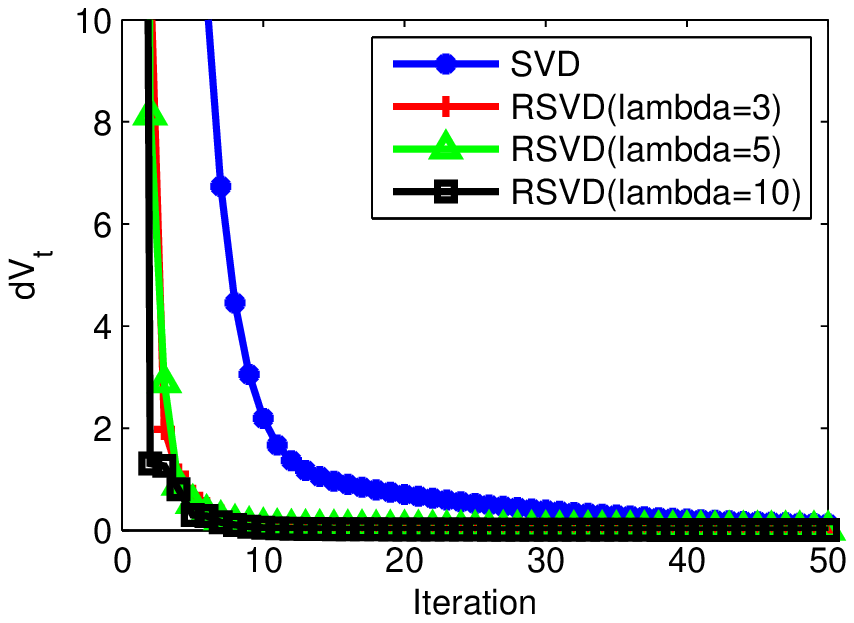}
  \caption{RottenTomatoes($k=3$)}
  \label{fig:rotentomato_j1}
\end{subfigure}
\begin{subfigure}{.24\textwidth}
  \centering
  \includegraphics[width=.96\textwidth]{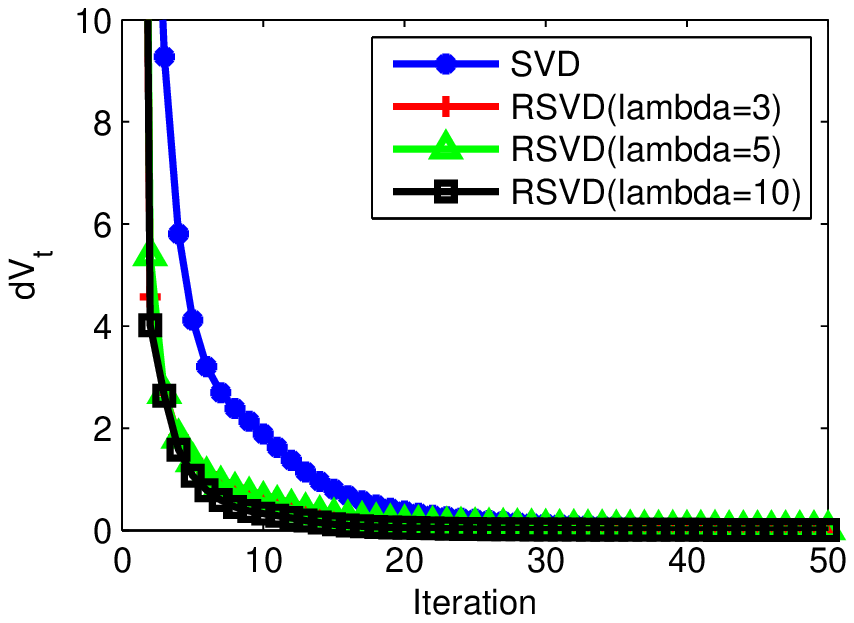}
  \caption{Jester1($k=14$)}
  \label{fig:js1_j1}
\end{subfigure}%
\begin{subfigure}{.24\textwidth}
  \centering
  \includegraphics[width=.96\textwidth]{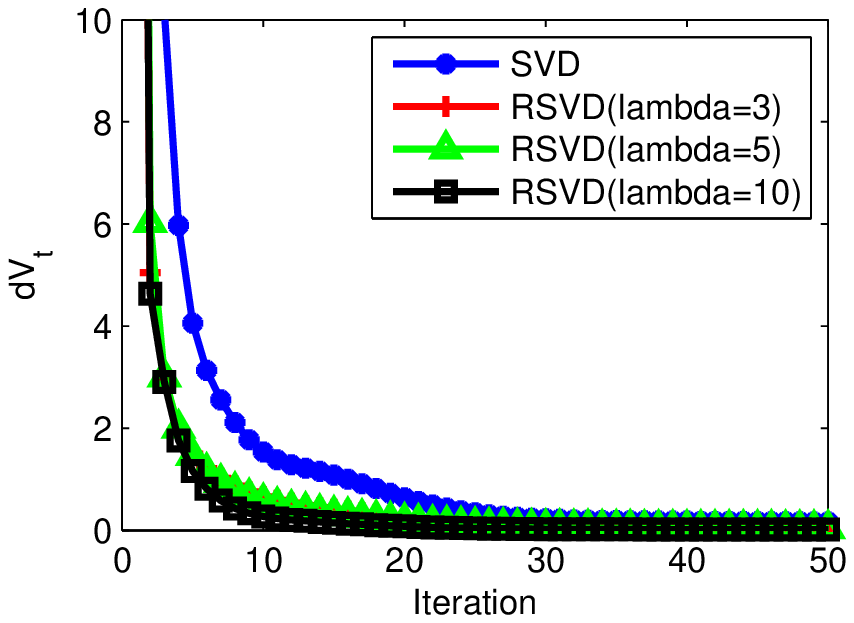}
  \caption{Jester2($k=14$)}
  \label{fig:js2_j1}
\end{subfigure}
\caption{RSVD convergence speed comparison at different $\lambda$, see Eq.(\ref{eq:dV_t}).}
\label{fig:j1}
\end{figure}

\begin{figure}[t!]
\centering
\begin{subfigure}{.24\textwidth}
  \centering
  \includegraphics[width=.96\textwidth]{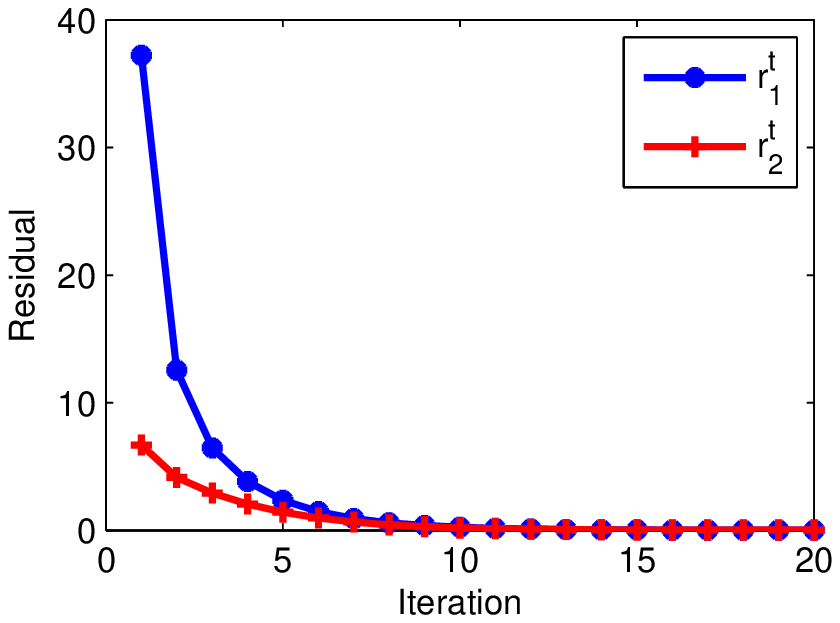}
  \caption{MovieLens($k=3, \lambda=3$)}
  \label{fig:share_basis_movielens}
\end{subfigure}%
\begin{subfigure}{.25\textwidth}
  \centering
  \includegraphics[width=.96\textwidth]{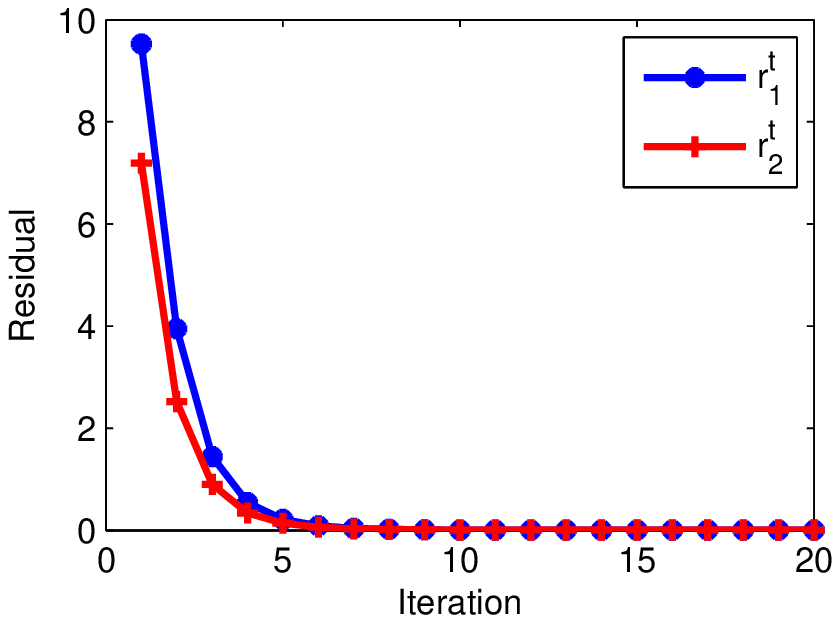}
  \caption{RottenTomatoes($k=3, \lambda=3$)}
  \label{fig:share_basis_roten}
\end{subfigure}
\begin{subfigure}{.24\textwidth}
  \centering
  \includegraphics[width=.96\textwidth]{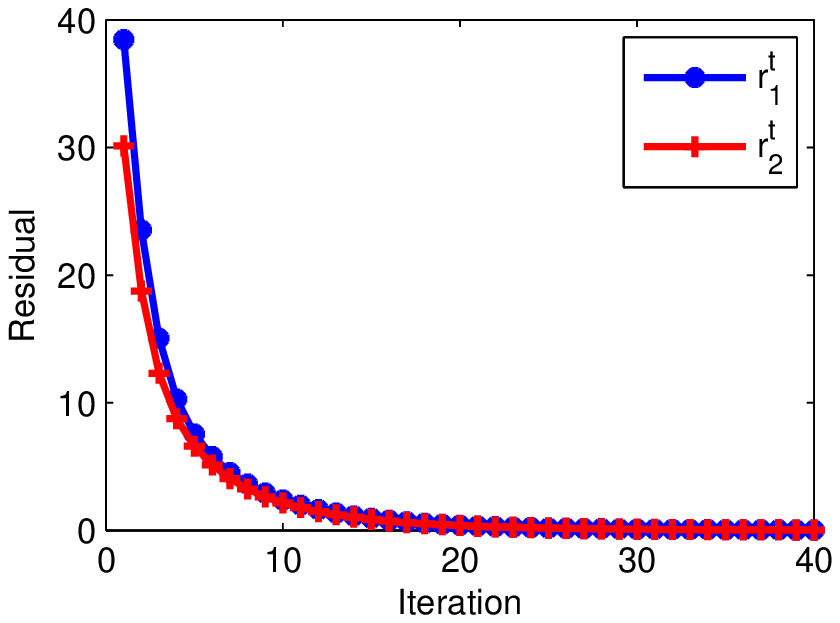}
  \caption{Jester1($k=14, \lambda=3$)}
  \label{fig:share_basis_js1}
\end{subfigure}%
\begin{subfigure}{.24\textwidth}
  \centering
  \includegraphics[width=.96\textwidth]{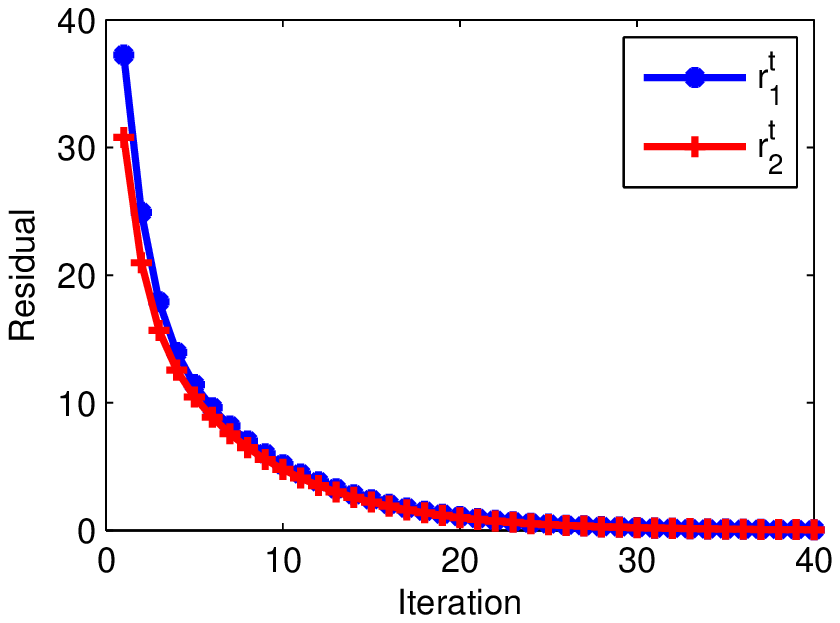}
  \caption{Jester2($k=14, \lambda=3$)}
  \label{fig:share_basis_js2}
\end{subfigure}
\caption{RSVD share the same SVD subspace, see Eqs.(\ref{eq:r1},\ref{eq:r2}).}
\label{fig:shared_basis}
\end{figure}

\begin{figure}[t!]
\centering
\begin{subfigure}{.24\textwidth}
  \centering
  \includegraphics[width=.96\textwidth]{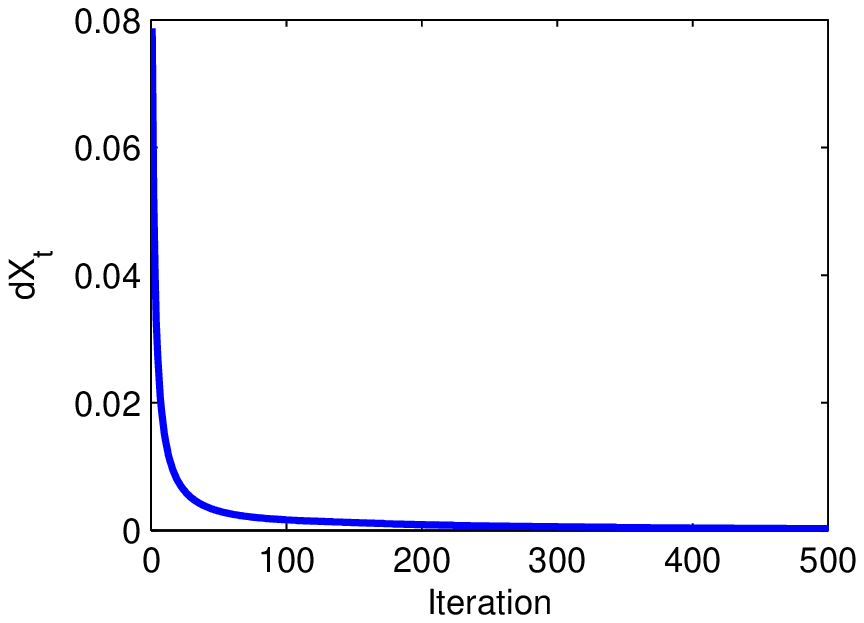}
  \caption{MovieLens($k=3, \lambda=3$)}
  \label{fig:movielens_em}
\end{subfigure}%
\begin{subfigure}{.25\textwidth}
  \centering
  \includegraphics[width=.96\textwidth]{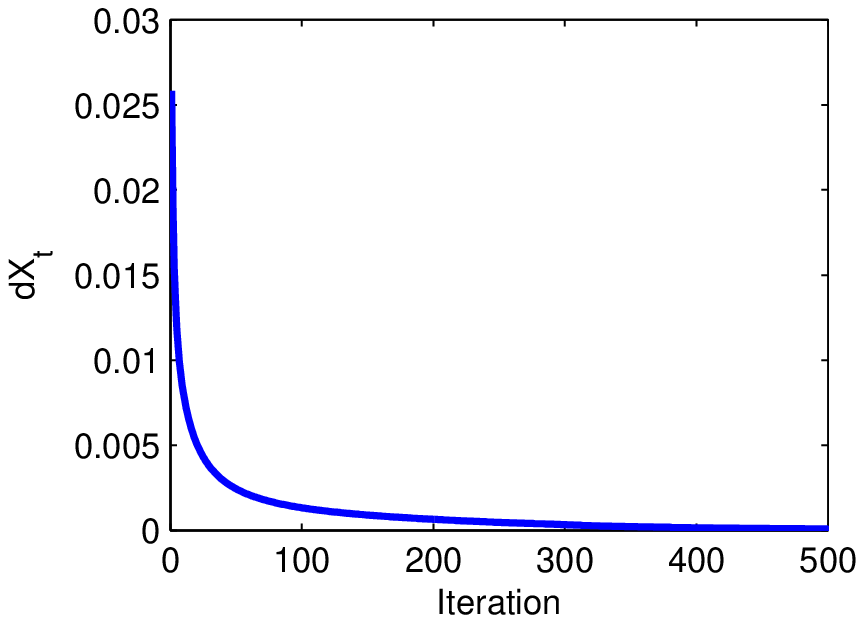}
  \caption{RottenTomatoes($k=3, \lambda=3$)}
  \label{fig:rotentomato_em}
\end{subfigure}
\begin{subfigure}{.24\textwidth}
  \centering
  \includegraphics[width=.96\textwidth]{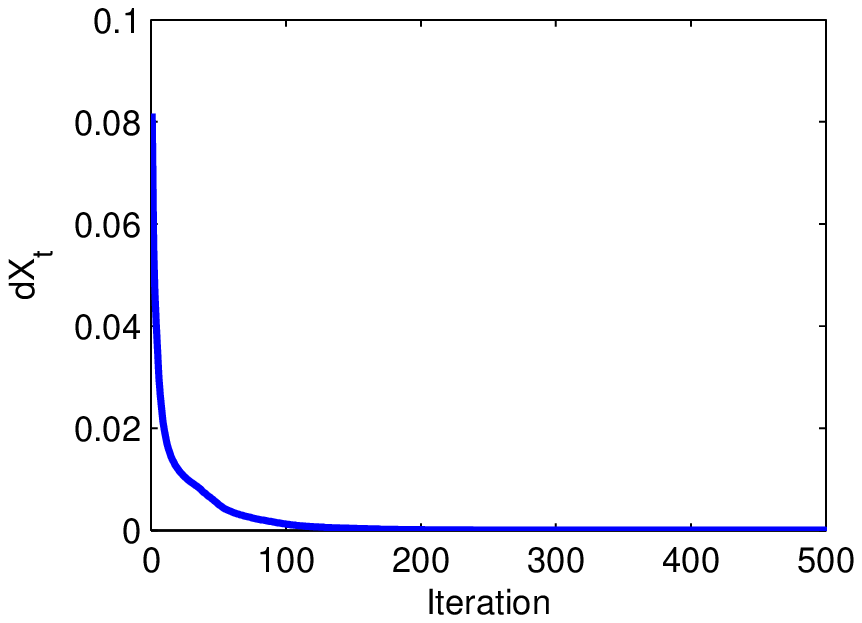}
  \caption{Jester1($k=14, \lambda=3$)}
  \label{fig:js1_em}
\end{subfigure}%
\begin{subfigure}{.24\textwidth}
  \centering
  \includegraphics[width=.96\textwidth]{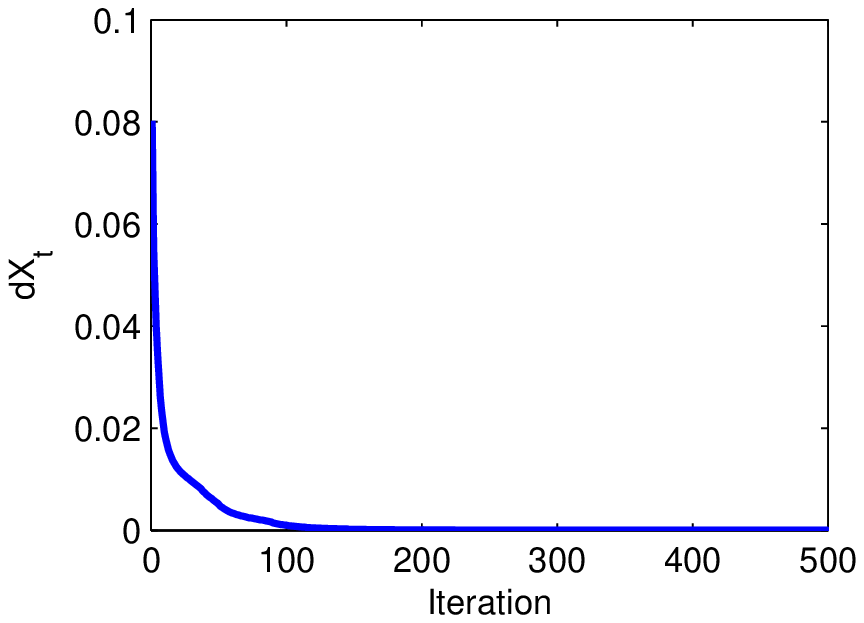}
  \caption{Jester2($k=14, \lambda=3$)}
  \label{fig:js2_em}
\end{subfigure}
\caption{Convergence of the solution to Recommender Systems of Eqs.(\ref{eq:svdobj},\ref{eq:rsvdobj}) as the iteration of EM steps.}
\label{fig:em}
\end{figure}

\begin{figure}[t!]
\centering
\begin{subfigure}{.24\textwidth}
  \centering
  \includegraphics[width=.96\textwidth]{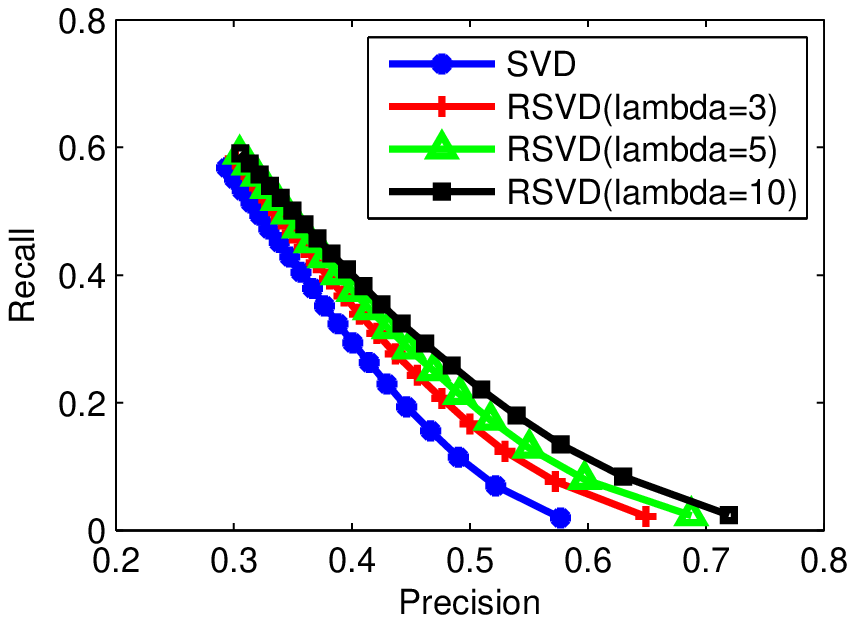}
  \caption{k=3}
  \label{fig:movielens_k3}
\end{subfigure}%
\begin{subfigure}{.24\textwidth}
  \centering
  \includegraphics[width=.96\textwidth]{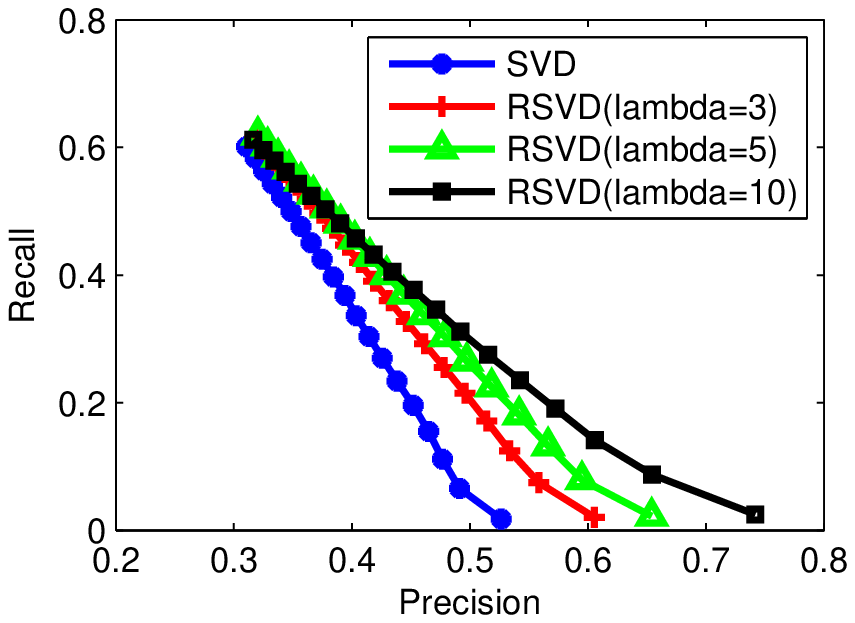}
  \caption{k=5}
  \label{fig:movielens_k5}
\end{subfigure}
\begin{subfigure}{.24\textwidth}
  \centering
  \includegraphics[width=.96\textwidth]{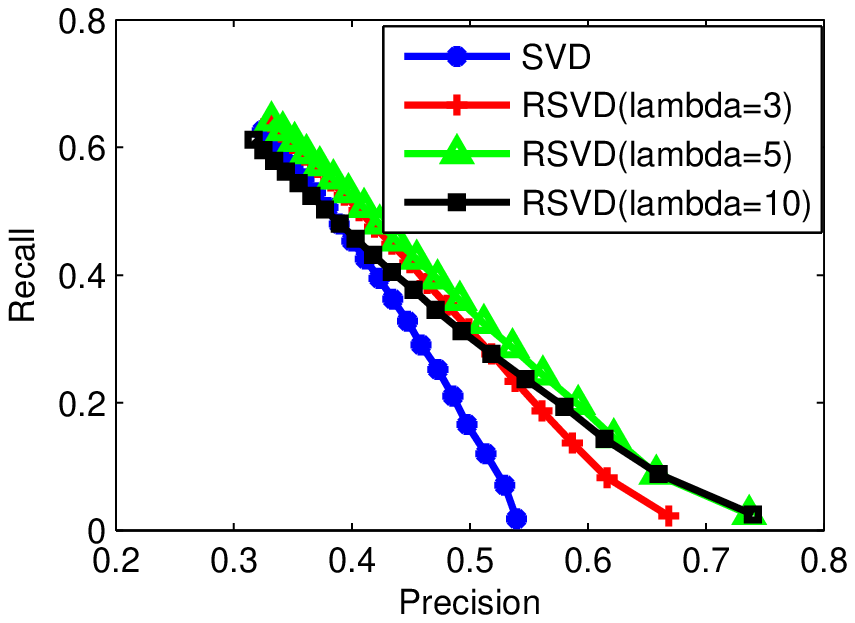}
  \caption{k=7}
  \label{fig:movielens_k7}
\end{subfigure}%
\begin{subfigure}{.24\textwidth}
  \centering
  \includegraphics[width=.96\textwidth]{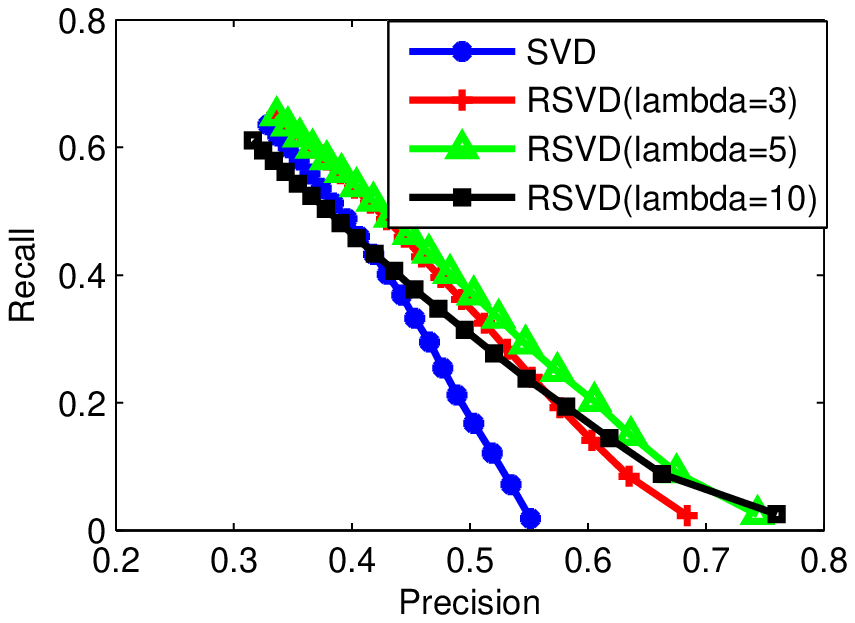}
  \caption{k=9}
  \label{fig:movielens_k9}
\end{subfigure}
\caption{Precision and Recall curves on MovieLens.}
\label{fig:movielens}
\end{figure}

\begin{figure}[t!]
\centering
\begin{subfigure}{.24\textwidth}
  \centering
  \includegraphics[width=.96\textwidth]{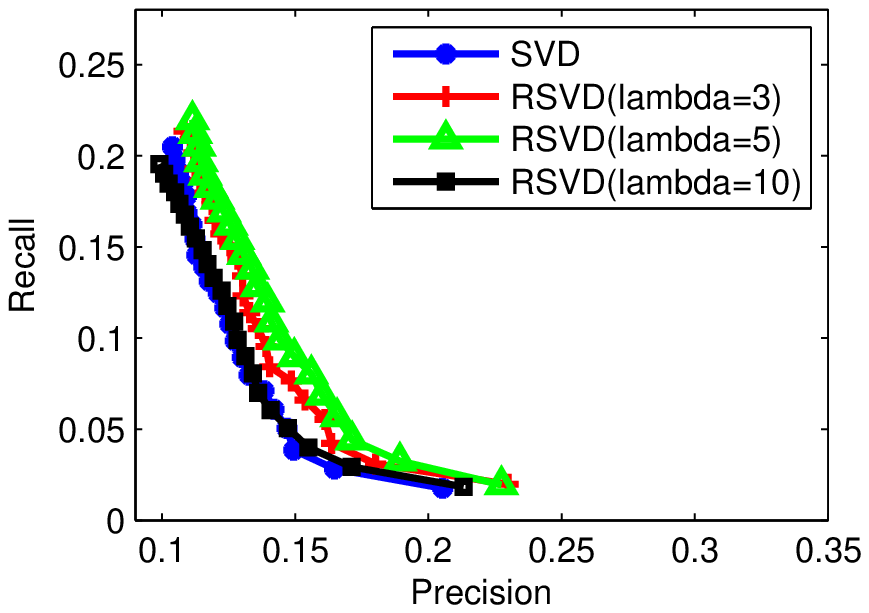}
  \caption{k=3}
  \label{fig:rotentomato_k3}
\end{subfigure}%
\begin{subfigure}{.24\textwidth}
  \centering
  \includegraphics[width=.96\textwidth]{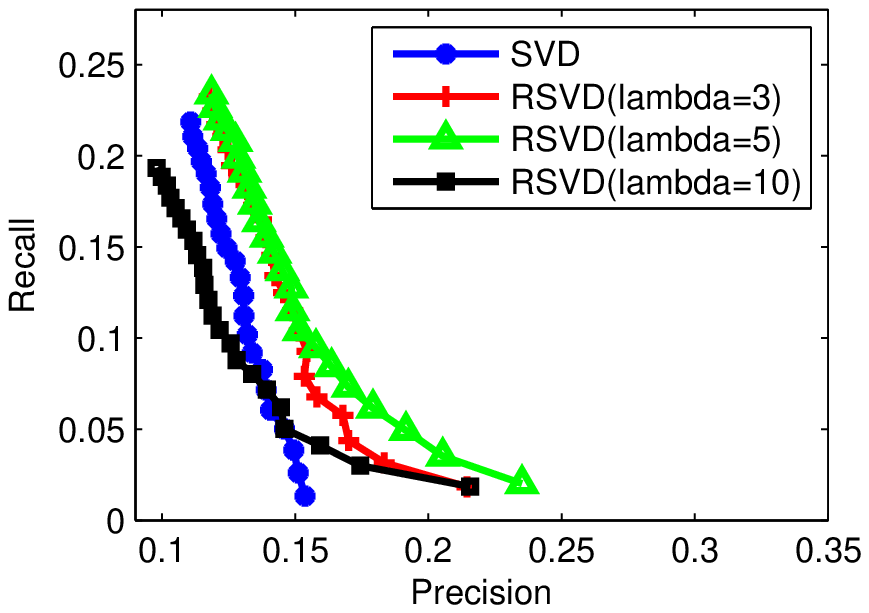}
  \caption{k=5}
  \label{fig:rotentomato_k5}
\end{subfigure}
\begin{subfigure}{.24\textwidth}
  \centering
  \includegraphics[width=.96\textwidth]{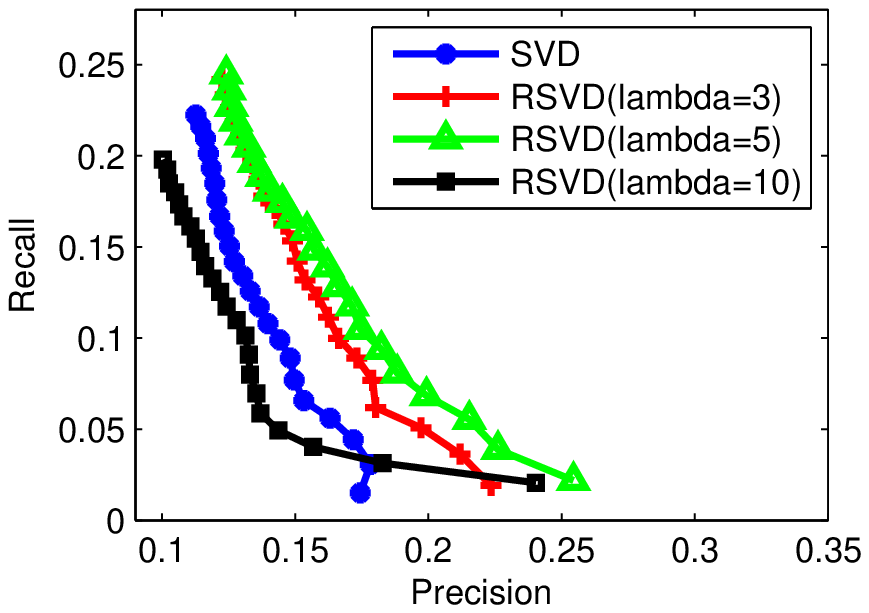}
  \caption{k=7}
  \label{fig:rotentomato_k7}
\end{subfigure}%
\begin{subfigure}{.24\textwidth}
  \centering
  \includegraphics[width=.96\textwidth]{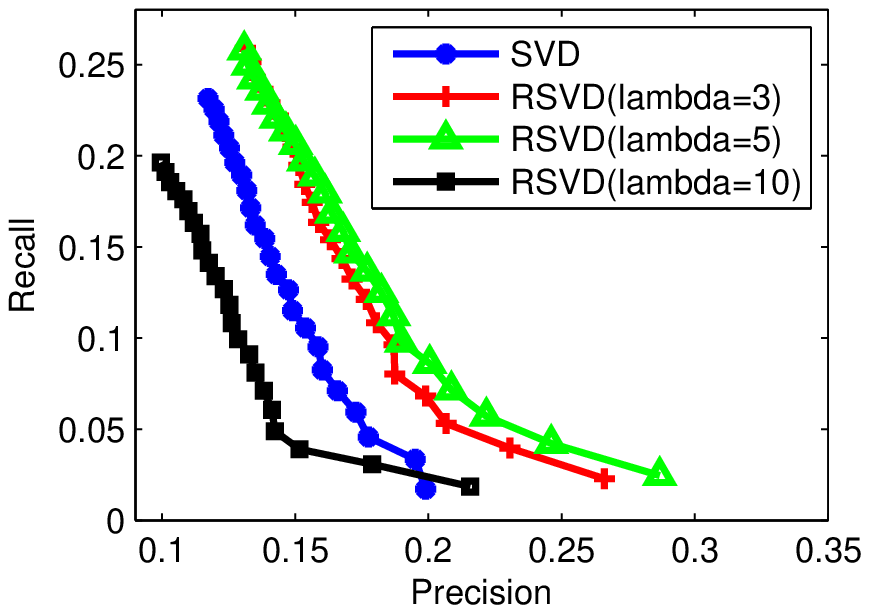}
  \caption{k=9}
  \label{fig:rotentomato_k9}
\end{subfigure}
\caption{Precision and Recall curves on RottenTomatoes.}
\label{fig:rotentomato}
\end{figure}

\begin{figure}[t!]
\centering
\begin{subfigure}{.24\textwidth}
  \centering
  \includegraphics[width=.96\textwidth]{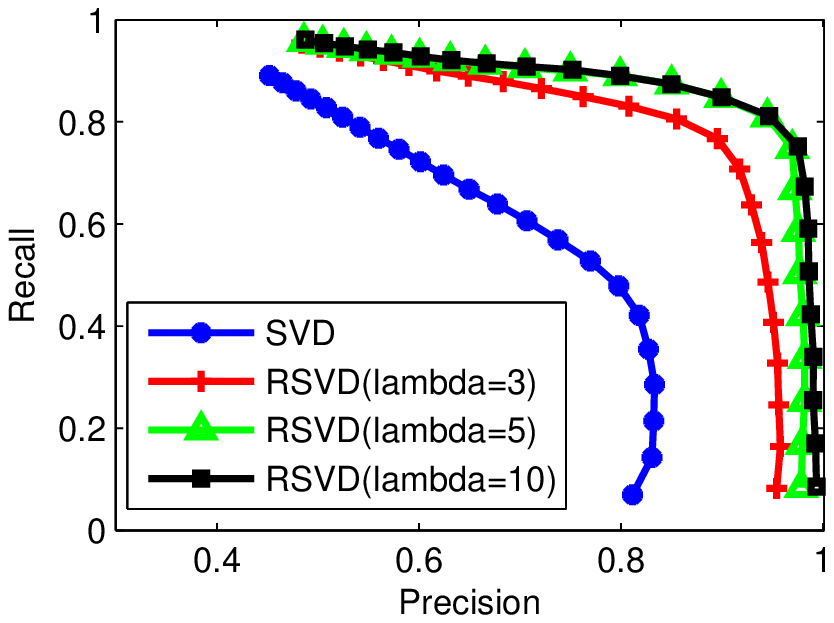}
  \caption{k=14}
  \label{fig:js1_k14}
\end{subfigure}%
\begin{subfigure}{.24\textwidth}
  \centering
  \includegraphics[width=.96\textwidth]{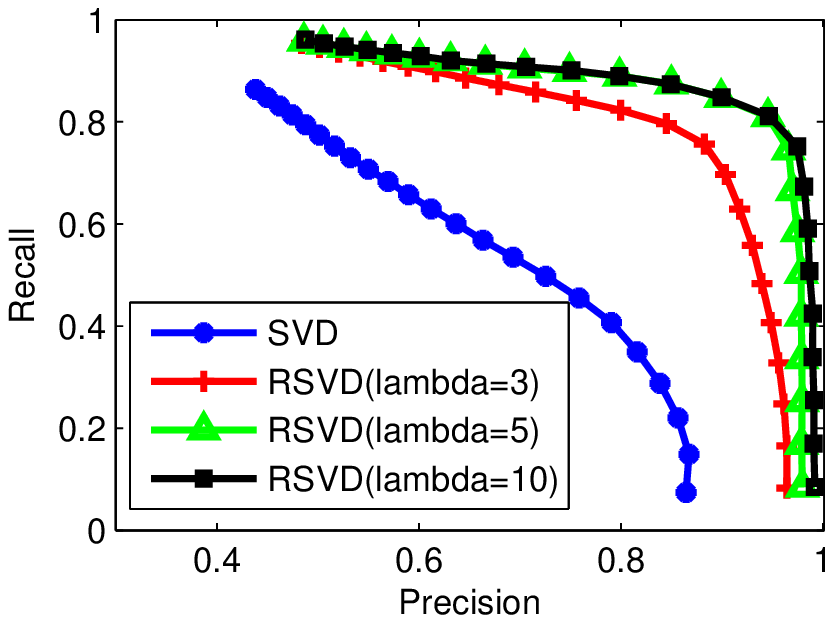}
  \caption{k=16}
  \label{fig:js1_k16}
\end{subfigure}
\begin{subfigure}{.24\textwidth}
  \centering
  \includegraphics[width=.96\textwidth]{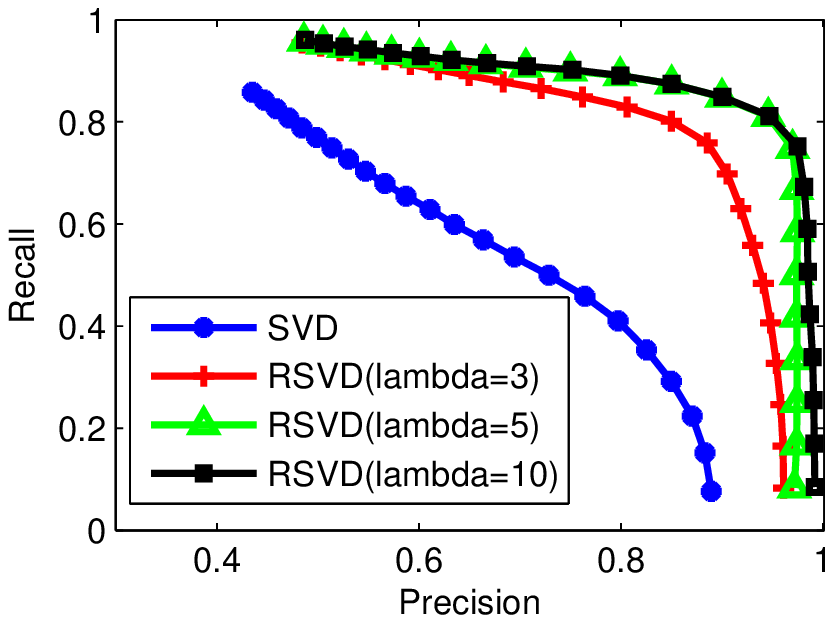}
  \caption{k=18}
  \label{fig:js1_k18}
\end{subfigure}%
\begin{subfigure}{.24\textwidth}
  \centering
  \includegraphics[width=.96\textwidth]{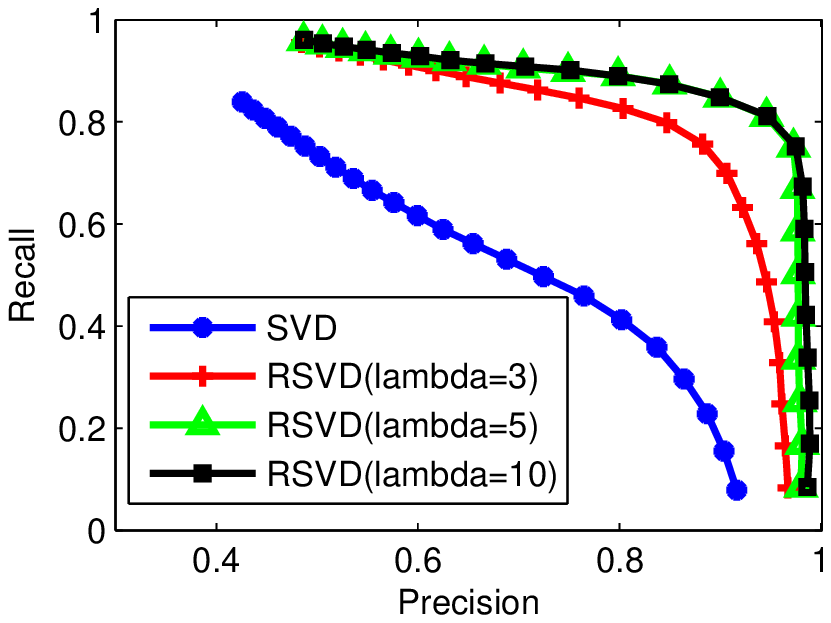}
  \caption{k=20}
  \label{fig:js1_k20}
\end{subfigure}
\caption{Precision and Recall curves on Jester1.}
\label{fig:js1}
\end{figure}

\begin{figure}[t!]
\centering
\begin{subfigure}{.24\textwidth}
  \centering
  \includegraphics[width=.96\textwidth]{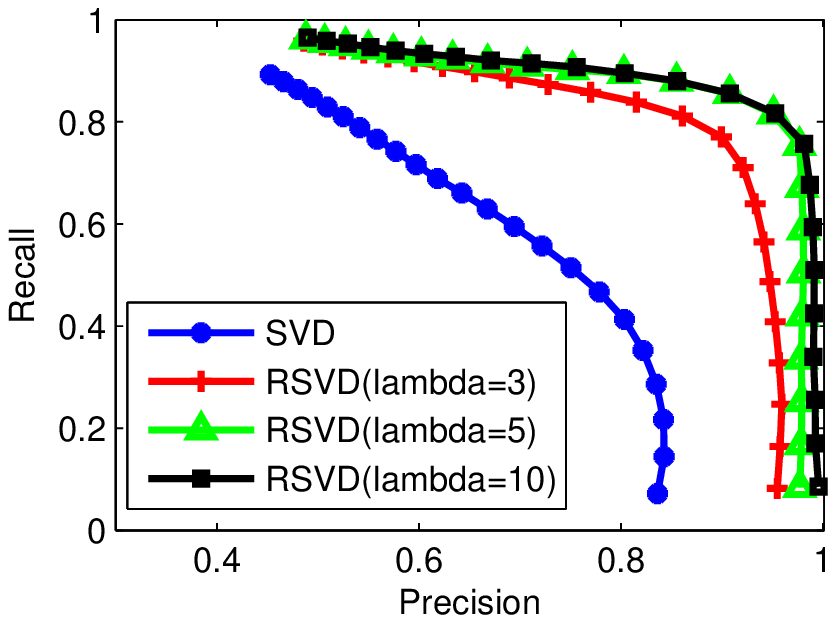}
  \caption{k=14}
  \label{fig:js2_k14}
\end{subfigure}%
\begin{subfigure}{.24\textwidth}
  \centering
  \includegraphics[width=.96\textwidth]{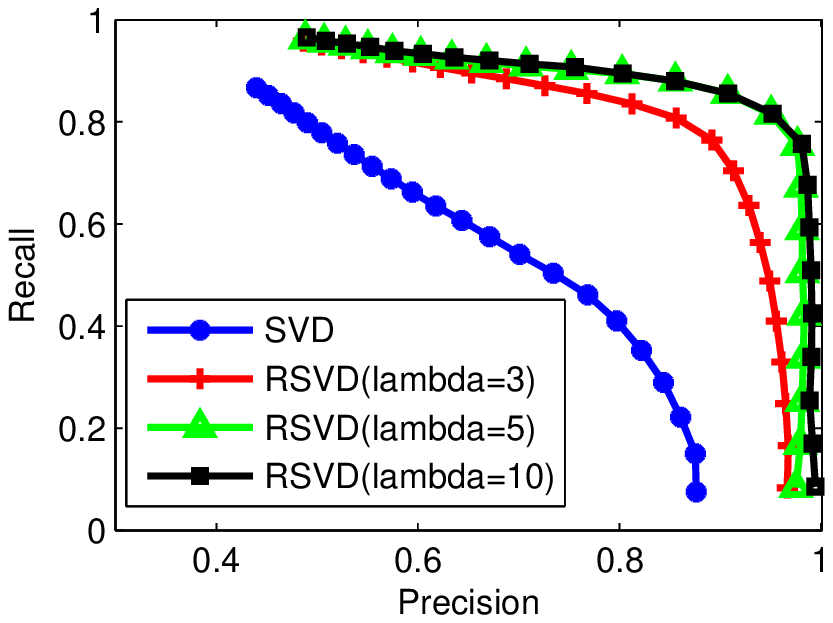}
  \caption{k=16}
  \label{fig:js2_k16}
\end{subfigure}
\begin{subfigure}{.24\textwidth}
  \centering
  \includegraphics[width=.96\textwidth]{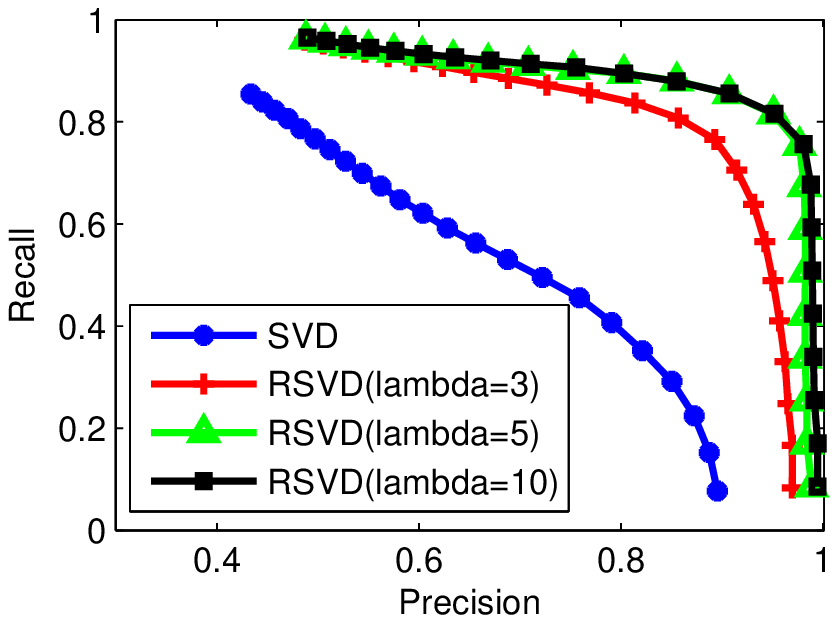}
  \caption{k=18}
  \label{fig:js2_k18}
\end{subfigure}%
\begin{subfigure}{.24\textwidth}
  \centering
  \includegraphics[width=.96\textwidth]{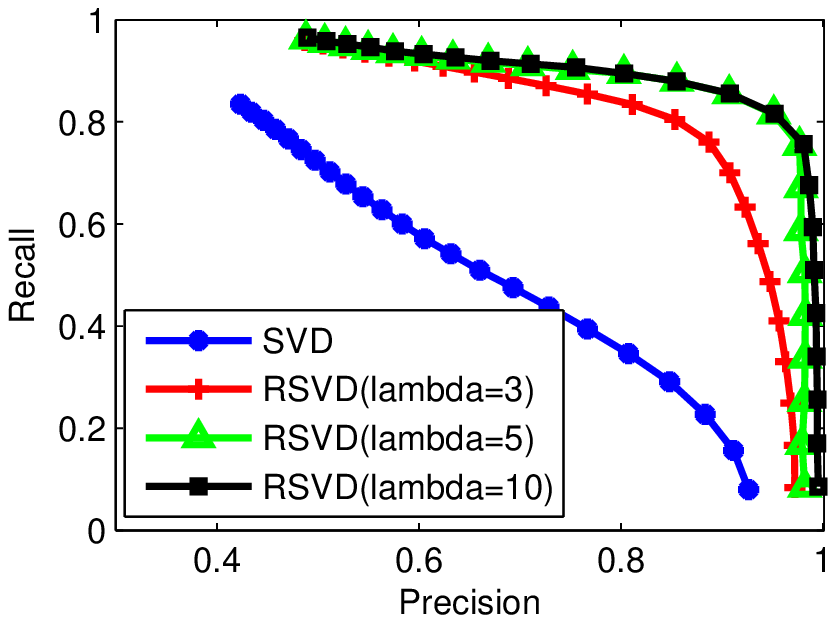}
  \caption{k=20}
  \label{fig:js2_k20}
\end{subfigure}
\caption{Precision and Recall curves on Jester2.}
\label{fig:js2}
\end{figure}


{\bf MovieLens} \cite{movielens1} \cite{movielens2} 
This data set consists of 100,000 ratings from 943 users on 1,682 movies. Each user has at least 20 ratings and the average number of ratings per user is 106.


{\bf RottenTomatoes} \cite{movielens1} \cite{imdb} \cite{rotten}
This dataset contains 931 users and 1,274 artists. Each user has at least 2 movie ratings and the average number of ratings per user is 17.

{\bf Jester1} \cite{jester} Jester is an online Joke recommender system and it has 3 .zip files.
Jester1 dataset contains 24,983 users and is the 1st .zip file of Jester data.
In our experiments, we choose 1,731 users with each user having 40 or less joke ratings. The average number of ratings per user is 37.

{\bf Jester2} \cite{jester} 
Jester2 dataset contains 23,500 users and is the 2nd .zip file of Jester data.
In our experiments, we choose 1,706 users with each user having 40 or less joke ratings. The average number of ratings per user is 37.

\subsection{Training data}

Following standard approach, we convert all rated entries to 1 and all missing value entries remains zero.
 The evaluation methodology is: (1) construct training data by converting some 1s in the rating matrix into 0s, which is called ``mask-out", (2) check if recommender algorithms can correctly recommend these masked-out ratings. Suppose we are given a set of user-item rating records, namely $X \in \Re^{n \times m}$, where $X$ is the rating matrix, $n$ is user number and $m$ is item number. Each row of $X$ denotes one user. To evaluate the performance of a recommender system algorithm, we need to know how accurate this algorithm can predict those $1$s. We refer to the original data matrix as ground truth and mask out some ratings for some selected users. The {\bf mask-out} process is as follows:
\begin{enumerate}
  \item Find training users: those users with more than $t$ ratings are selected as training users, where $t$ is a threshold and $t$ is a number related to the average ratings per user ($m_{rating}$). $t$ controls the number of training users ($n_{user}$).
  \item Mask out training ratings: for $n_{user}$ selected training users, select $n_{mask}$ ratings randomly per training user. In the user-item matrix $X$, we change those $1$s into $0$s.
\end{enumerate}%
Table \ref{tab:train} shows the training data mask-out settings used in our experiments.
It should be noted that these parameters are only one setting of constructing training datasets. Different settings will not make much difference, as long as we compare different recommender system algorithms on the same training dataset.

\begin{table}[t!]
\centering
\small
\caption{Training data parameter settings.}
\label{tab:train}
\scalebox{1}{
\begin{tabular}{c|cccc}
\hline\hline
Data    & $t$  &  $n_{user}$ & $m_{rating}$ & $n_{mask}$ \\
\hline
MovieLens      &100 & 361 &106  & 90 \\
RottenTomatoes &40  & 86  &17   & 35\\
Jester1        &37  & 803  &37  & 35\\
Jester2  	   &37  & 774  &37  & 35\\
\hline\hline
\end{tabular}}
\end{table}

\subsection{Top-N recommendation evaluation}

To check if recommender algorithms can correctly recommend these masked-out ratings, we use Top-N recommendation evaluation method. Top-N recommendation is an algorithm to identify a set of $N$ items that will be of interest to a certain users \cite{karypis2001evaluation}  \cite{deshpande2004item} \cite{sarwar2000application}. We use three metrics widely used in information retrieval community: recall, precision and $F_1$ measure. For each user, we first define three sets: $\mathbb{M}$, $\mathbb{T}$ and $\mathbb{H}$:

{\bf $\mathbb{M}$}: Mask-out set. Size is $n_{mask}$. This set contains the ratings that are masked out(those values in data matrix $X$ were changed from $1$ to $0$).

{\bf $\mathbb{T}$}: Top-N set. Size is $N$. This set contains the $N$ ratings that has the highest values (score) after using recommendation algorithm.

{\bf $\mathbb{H}$}: Hit set. This set contains the ratings that appear both in $\mathbb{M}$ set and $\mathbb{T}$ set, $\mathbb{H} = \mathbb{M} \cap \mathbb{T}$.

\noindent Recall and precision are then defined as follows:
\begin{align}
\label{eq:pr}
\mbox{Recall} = \frac{ \mbox{size of set } \mathbb{H} }{\mbox{size of set } \mathbb{M}}, 
\mbox{Precision} = \frac{ \mbox{size of set } \mathbb{H}}{\mbox{size of set } \mathbb{T}}
\end{align}%
$F_1$ measure \cite{yang1999re} combines recall and precision with an equal weight in the following form:
\begin{align}
\label{eq:f1}
F_1 = \frac{ 2 \times \mbox{Recall} \times \mbox{Precision}}{\mbox{Recall} + \mbox{Precision}}
\end{align}%
We will get a pair of recall and precision using each $N$. In experiments, we use $N$ from 1 to $2 n_{mask}$, where $n_{mask}$ is the number of ratings masked out per user. Thus, we can get a precision-recall curve in this way.

\subsection{RSVD convergence speed comparison}

Convergence speed is important for a faster iterative algorithm. We will compare the convergence speed of RSVD with iterative SVD algorithm ($\lambda=0$). We define residual $dV_t$ to measure the difference of $V_t$ and $V_{t-1}$ in two consecutive iterations:
\begin{align}
dV_t = \| V_{t} - V_{t-1} \|_F,  \label{eq:dV_t}
\end{align}%
where $t$ is the iteration number of Algorithm \ref{alg:rsvd}. We compare RSVD with SVD ($\lambda=0$) using different regularization weight parameter $\lambda=3, 5, 10$. Figure \ref{fig:j1} shows the $dV_t$ decreases quickly along with iterations and RSVD converges faster than SVD.

\subsection{RSVD share the same SVD subspace}

From Theorem \ref{tm:thm22}, we know that the solution of RSVD should be in the subspace of SVD solution. Formally, let $U_t, V_t$ be the solution of RSVD after $t$ iterations, $F,G$ be the solution of SVD, $X=FG^T$.
We now introduce Eq.(\ref{eq:r1}) and Eq.(\ref{eq:r2}) to measure the difference between $U_t, V_t$ and $F,G$. $r_1^t$ and $r_2^t$ are defined as
\begin{align}
r_1^t = \| U_t- FA_t\|_F^2, \label{eq:r1}\\
r_2^t = \| V_t- GB_t\|_F^2. \label{eq:r2}
\end{align}%
In order to minimize $r_1^t$ and $r_2^t$, the solution of $A_t$ and $B_t$ can be given as:
\begin{align}
A_t = (F^T F)^{-1} F^T U_t, \label{eq:au} \\
B_t = (G^T G)^{-1} G^T V_t. \label{eq:av}
\end{align}%
Substituting Eq.(\ref{eq:au}) and Eq.(\ref{eq:av}) back to Eq.(\ref{eq:r1}) and Eq.(\ref{eq:r2}), we get the minimized residual $r_1^t$ and $r_2^t$.
If $r_1^t$ and $r_2^t$ are equal to $0$, it means that RSVD solution $U_t$ and $V_t$ share the same subspace as SVD solution $F$ and $G$. Figure \ref{fig:shared_basis} shows residual $r_2^t$ and $r_3^t$ converges to 0 after a few iterations.

\subsection{Convergence of recommender system solution}

Solutions to the recommender systems Eqs.(\ref{eq:svdobj},\ref{eq:rsvdobj}) converge. The EM-like algorithm has been shown effective in solving recommender systems \cite{srebro2003weighted} \cite{koren2009matrix} \cite{kurucz2007methods} . 
We show the solution $(X_t)_{\Omega}$ converges after $t$ iterations of EM-like iterations by
 using the difference,
\begin{align}
\label{eq:dX_t}
dX_t = \frac{1}{\sqrt{ N_\Omega} } \|X_{t} - X_{t-1} \|_{\Omega}.
\end{align}
where $N_{\Omega}$ is size of set $\Omega$.
Figure \ref{fig:em} shows the experiment result of $dX_{t}$.
As we can see, for all the 4 datasets, the solution converges in about 100 to 200 iterations. 

\subsection{Precision-Recall Curve}

In this part, we compare the precision and recall of RSVD and SVD using different rank $k$ and regularization weight parameter $\lambda$. We use these $k$ and $\lambda$ settings because both RSVD and SVD models with these settings produce the best precision and recall.
All the curves are the average results of 5 random run.

Figure \ref{fig:movielens} shows MovieLens data using SVD and RSVD with rank $k=3,5,7,9$. For each rank $k$, we compare SVD and RSVD with regularization weight parameter $\lambda=3,5,10$. As we can see, for each rank $k$, RSVD performs better than SVD generally.
Choosing $\lambda$ properly could improve SVD algorithm and achieve the best precision and recall results.

Figure \ref{fig:rotentomato} shows RottenTomatoes data using SVD and RSVD with rank $k=3,5,7,9$. In all figures, RSVD with $\lambda = 5$ performs the best.

Figure \ref{fig:js1} shows Jester1 data using SVD and RSVD with rank $k=14,16,18,20$. It is very easy to find that RSVD with $\lambda=5,10$ produce the best precision result for this data.

Figure \ref{fig:js2} shows Jester2 data using SVD and RSVD with rank $k=14,16,18,20$. We can see from the results that RSVD with $\lambda=5,10$ produce the best precision result.
As in Jester1 data, RSVD algorithm improves SVD significantly.

\subsection{$F_1$ measure}
$F_1$ measure combines precision and recall at the same time and can be used a good metric. $F_1$ measure is defined in Eq.(\ref{eq:f1}). Since each $N$ gives a pair of precision and recall, we use $F_1$ measure when $N=n_{mask}$ as the standard. Because $N=n_{mask}$, if all the masked-out ratings are predicted correctly, the size of set $\mathbb{H}$ can be exactly $n_{mask}$, which means recall is 1. $F_1$ measure ranges from 0 to 1. A higher $F_1$ measure (close to 1) means that an algorithm has better performance.

Table \ref{tab:f1} shows the $F_1$ measure of the four datasets. Each row denotes a dataset with a specific rank $k$. The best $F_1$ measure is denoted in bold. As we can see, for all the datasets and ranks that we experimented, $\lambda=5$ is a good setting that produces the highest $F_1$ measure.
In all, RSVD performs much better than SVD in terms of $F_1$ measure. In applications, we can test different $\lambda$ and rank $k$ setting to find the best setting for specific problems.

\begin{table}[t]
\centering \small
\caption{$F_1$ measure (best values are in bold.).}
\label{tab:f1}
\scalebox{0.9}{
\begin{tabular}{c|cccc}
\hline\hline
Data    & SVD  &  RSVD & RSVD & RSVD \\
    &    &  ($\lambda=3$)& ($\lambda=5$) & ($\lambda=10$) \\
\hline
MovieLens (k=3)& 0.3700&	0.3850&	0.3922&	{\bf 0.4005}\\
MovieLens (k=5)& 0.3875&	0.4100&	0.4199&	{\bf 0.4232}\\
MovieLens (k=7)& 0.4152&	0.4391&	{\bf 0.4439}&	0.4231\\
MovieLens (k=9)& 0.4220&	0.4497&	{\bf 0.4542}&	0.4244\\
\hline
RottenTomatoes (k=3)& 0.1220&	0.1308&	{\bf 0.1337}&	0.1235\\
RottenTomatoes (k=5)&0.1302&	0.1413&	{\bf 0.1436}&	0.1176\\
RottenTomatoes (k=7)&0.1315&	0.1501&	{\bf 0.1543}&	0.1228\\
RottenTomatoes (k=9)&0.1422&	0.1614&	{\bf 0.1651}&	0.1240\\
\hline
Jester1 (k=14)& 0.6587&	0.8241&	{\bf 0.8668}&	0.8665\\
Jester1 (k=16)& 0.6201&	0.8151&	{\bf 0.8667}&	0.8659\\
Jester1 (k=18)& 0.6188&	0.8213&	{\bf 0.8672}&	0.8666\\
Jester1 (k=20)& 0.6077&	0.8177&	{\bf0.8667}&	0.8658\\
\hline
Jester2 (k=14) &0.6506&	0.8305&	{\bf 0.8730}&	{\bf 0.8730}\\
Jester2 (k=16) &0.6261&	0.8277&	{\bf 0.8732}&	0.8725\\
Jester2 (k=18) &0.6114&	0.8288&	{\bf 0.8729}&	0.8723\\
Jester2 (k=20) &0.5908&	0.8259&	0.8721&	{\bf 0.8722}\\
\hline\hline
\end{tabular}}
\end{table}

\section{Conclusion}

In conclusion, SVD is the mathematical basis of principal component analysis (PCA). We present a regularized SVD (RSVD), present an efficient computational algorithm, and provide several theoretical analysis. We show that although RSVD is non-convex, it has a closed-form global optimal solution.
Finally, we apply regularized SVD to the application of recommender system and experimental results show that regularized SVD (RSVD) outperforms SVD significantly.


\bibliographystyle{abbrv}
\bibliography{bib1}

\end{document}